\newtheorem{assumption}{Assumption}
\newtheorem{theorem}{Theorem}
\newtheorem{lemma}{Lemma}
\DeclareMathOperator*{\supp}{supp}
\begin{document}

\title{Regularization Helps with Mitigating Poisoning Attacks: Distributionally-Robust Machine Learning Using the Wasserstein Distance }

\author{Farhad Farokhi\thanks{F. Farokhi is with the CSIRO's Data61 and the University of Melbourne, Australia. emails: farhad.farokhi@data61.csiro.au; ffarokhi@unimelb.edu.au}}

\maketitle

\begin{abstract} We use distributionally-robust optimization for machine learning to mitigate the effect of data poisoning attacks. We provide performance guarantees for the trained model on the original data (not including the poison records) by training the model for the worst-case distribution on a neighbourhood around the empirical distribution (extracted from the training dataset corrupted by a poisoning attack) defined using the Wasserstein distance. We relax the distributionally-robust machine learning problem by finding an upper bound for the worst-case fitness based on the empirical sampled-averaged fitness and the Lipschitz-constant of the fitness function (on the data for given model parameters) as regularizer. For regression models, we prove that this regularizer is equal to the dual norm of the model parameters. We use the Wine Quality dataset, the Boston Housing Market dataset, and the Adult dataset for demonstrating the results of this paper.
\end{abstract}

\section{Introduction}
Data poisoning attacks refer to adversarial attacks on machine learning by adding malicious entries to the training dataset in order to manipulate  the  model~\cite{vorobeychik2018adversarial}. The malicious data could be in the form of label modification or flipping (i.e., changing the outputs in supervised machine learning), data insertion attacks (i.e., adding a limited number of arbitrary data points), and data modification attacks (i.e., modifying some features or labels for an arbitrary subset of the data). These attacks have proved to be powerful in practice~\cite{biggio2011support,biggio2012poisoning}.

Regularization has been shown to be effective in training machine learning models that are robust against data poisoning attacks~\cite{demontis2017infinity}. This is motivated by that regularization can reduce the impact of contaminated data records during training. Regularization can make the models generalize better to data that is not directly in the training dataset. This is however only made for support vector machines. A general analysis of why regularization can be an effective method for mitigating the effect of data poisoning attacks is missing from the literature. This is investigated in this paper with the aid of distributionally-robust machine learning. 

In this paper, we show that, by using distributionally-robust optimization~\cite{esfahani2018data} for machine learning, we can combat  data poisoning attacks. We show that we can guarantee the  performance of a trained model on the original data (not including the poison records) by training the model for the worst-case distribution on a ball around the empirical distribution (extracted using the adversarially-manipulated training data) defined using the Wasserstein distance. The Wasserstein distance can be seen as the optimal mass transportation plan between two distributions~\cite{kantorovich1958space}. We find an upper bound for the worst-case expected fitness based on the empirical sampled-averaged fitness and the Lipschitz-constant of the fitness function on the data for given model parameters. This allows us to relax the distributionally-robust machine learning problem to optimizing  the empirical sampled-averaged fitness plus the Lipschitz constant as a regularizer. For regression models, we prove that this regularizer is equal to the dual norm of the model parameters. We use three different datasets, i.e., the Wine Quality dataset~\cite{cortez2009modeling}, the Boston Housing Market dataset~\cite{harrison1978hedonic}, and the Adult dataset~\cite{kohavi1996scaling} for demonstrating the results of this paper.

The rest of the paper is organized as follows. We discuss the related work on poisoning attacks in adversarial machine learning in Section~\ref{sec:related}. Background information on the Wasserstein distance is presented in Section~\ref{sec:Wasserstein}. In Section~\ref{sec:robustML}, we present the  distributionally-robust machine learning problem using the Wasserstein distance  and transform the distributionally-robust optimization problem into a regularized machine learning problem. We discuss the results in the context of linear and logistic regression in Section~\ref{sec:regression}. Finally, we present some numerical results in Section~\ref{sec:example} and conclude the paper in Section~\ref{sec:conc}.

\section{Related Work}\label{sec:related}
Machine learning under malicious noise has been investigated for a long time~\cite{valiant1985learning,kearns1993learning,bshouty2002pac,kalai2008agnostically,servedio2003smooth,steinhardt2017certified}. The idea of sub-sampling was proposed in~\cite{kearns1993learning}. This approach relies on taking several random sub-samples of the dataset and training machine learning models on them. Then, we use the model with the smallest training error. Sub-sampling is powerful when a sufficiently small fraction of the data is malicious~\cite{vorobeychik2018adversarial}. Approaches based on outlier removal are proposed in~\cite{klivans2009learning,cretu2008casting,barreno2010security}. They rely on identifying and removing a small fraction of the data that is anomalous (i.e., follows a different distribution than the rest).  Another approach is to use robust learning by trimmed optimization, which relies on minimizing the empirical risk while pruning a fraction of the data with the largest error~\cite{liu2017robust}. Finally, regularization has been used as a method for mitigating data poisoning attacks when training support vector machines~\cite{demontis2017infinity}. Although different in nature, regularization has been also shown to be useful for mitigating decision-time (evasion) attacks~\cite{li2014feature,li2018evasion}. 

Despite the success of regularization in adversarial machine learning, a general analysis of why it is an effective apparatus for mitigating data poisoning attacks is still missing from the literature. In addition, although Wasserstein distance has been used in machine learning~\cite{frogner2015learning,arjovsky2017wasserstein,blanchet2019robust,esfahani2018data}, it has not been used for designing distributionally-robust machine learning models against adversarial data poisoning attacks. 

An important aspect of our work is that we do not assume anything about the adversary, e.g., its cost function or motivation, except that there exists a bound on its ability in poisoning the training dataset, as opposed to studies relying on game-theoretic methods for adversarial machine learning~\cite{farokhi_svm_2019,zhang2017game,ou2019mixed} that need to model an adversary perfectly.

\section{Wasserstein Distance}\label{sec:Wasserstein}
Let $\mathcal{M}(\Xi)$ denote the set of all probability distributions $\mathbb{Q}$ on $\Xi\subseteq\mathbb{R}^m$ such that $\mathbb{E}^{\xi\sim\mathbb{Q}}\{\|\xi\|\}<\infty$ for some norm $\|\cdot\|$ on $\mathbb{R}^m$. When it is evident from the context, we replace $\mathbb{E}^{\xi\sim\mathbb{Q}}\{\cdot\}$ with $\mathbb{E}^{\mathbb{Q}}\{\cdot\}$. The Wasserstein distance $\mathfrak{W}:\mathcal{M}(\Xi)\times \mathcal{M}(\Xi)\rightarrow \mathbb{R}_{\geq 0}:=\{x\in \mathbb{R}|x\geq 0\}$ is defined as
\begin{align*}
\mathfrak{W}(\mathbb{P}_1,\mathbb{P}_2):=&\inf \Bigg\{\int_{\Xi^2} \|\xi_1-\xi_2\|\Pi(\mathrm{d}\xi_1,\mathrm{d}\xi_2): \\
& \Pi \mbox{ is a joint disribution on } \xi_1 \mbox{ and }\xi_2\\
& \mbox{with marginals }\mathbb{P}_1\mbox{ and }\mathbb{P}_2\mbox{, respectively}\Bigg\},
\end{align*}
for all $\mathbb{P}_1,\mathbb{P}_2\in\mathcal{M}(\Xi)$~\cite{kantorovich1958space}. The decision variable $\Pi$ can be interpreted as a transportation plan for moving mass distribution described by $\mathbb{P}_1$ to mass distribution described by $\mathbb{P}_2$. In this case, $\mathfrak{W}(\mathbb{P}_1,\mathbb{P}_2)$ can be seen as the optimal mass transportation plan. This metric is sometimes referred to as the Earth-Mover  distance. Note that $\mathfrak{W}(\mathbb{P}_1,\mathbb{P}_2)=\mathfrak{W}(\mathbb{P}_2,\mathbb{P}_1)$. We can compute the Wasserstein distance using an alternative approach due to~\cite{kantorovich1958space} by 
\begin{align*}
\mathfrak{W}(\mathbb{P}_1,\mathbb{P}_2):=\sup_{f\in\mathbb{L}} \Bigg\{&\hspace{-.03in}\int_{\Xi} f(\xi)\mathbb{P}_1(\mathrm{d}\xi)-\int_{\Xi} f(\xi)\mathbb{P}_2(\mathrm{d}\xi)\Bigg\},
\end{align*}
where $\mathbb{L}$ denotes the set of all Lipschitz functions with Lipschitz constant upper bounded by one, i.e., all functions $f$ such that $|f(\xi_1)-f(\xi_2)|\leq \|\xi_1-\xi_2\|$ for all $\xi_1,\xi_2\in\Xi$. Now, we are ready to pose the problem of training machine learning models under poisoning attacks using distributionally-robust optimization.

\section{Distributionally-Robust Machine Learning \\ Against Poisoning Attacks}
\label{sec:robustML}
Consider a supervised learning problem with dataset $\{(x_i,y_i)\}_{i=1}^n$, where $x_i\in\mathbb{R}^{p_x}$ denotes the inputs and $y_i\in\mathbb{R}^{p_y}$ denotes the outputs. For instance, in image classification, $x_i$ is a vector of pixels from an image (which could be compressed by extracting important features from the image) and $y_i$ is the class to which that image belongs (mapped to a set of integers). Our goal is to extract a machine learning model $\mathfrak{M}(\cdot;\theta):\mathbb{R}^{p_x} \rightarrow\mathbb{R}^{p_y}$ to describe the relationship between inputs and outputs. This is often done by solving the stochastic program
\begin{align} \label{eqn:optimization:1}
J^*:=\inf_{\theta\in\Theta}  \mathbb{E}^{\mathbb{P}} \{\ell(\mathfrak{M}(x;\theta),y)\},
\end{align}
where $\Theta$ is the set of feasible model parameters, $\ell:\mathbb{R}^{p_y}\times \mathbb{R}^{p_y}\rightarrow \mathbb{R}$ is the loss function, and $\mathbb{P}$ is the distribution of the inputs and outputs $(x,y)$ of the model. Since we do not know the distribution $\mathbb{P}$, we rely on the training dataset  $\{(x_i,y_i)\}_{i=1}^n$ to solve the sample-averaged approximation problem in 
\begin{align} \label{eqn:optimization:2}
\hat{J}:=\inf_{\theta\in\Theta}  \frac{1}{n} \sum_{i=1}^n\ell(\mathfrak{M}(x_i;\theta),y_i).
\end{align}
Even if we knew $\mathbb{P}$, we cannot solve~\eqref{eqn:optimization:1} directly (in general) as integrating the loss function for arbitrary distributions is a computationally complicated task~\cite{hanasusanto2016comment}. The approximation in~\eqref{eqn:optimization:2} is the basis of machine learning literature. When $n$ is large enough, solving~\eqref{eqn:optimization:2} is a good proxy for~\eqref{eqn:optimization:1} if the training dataset is clean. However, in practice, the dataset might be compromised due to adversarial inputs. 

We assume that the training dataset is composed of two types of correct and malicious samples. The correct samples are independently and identically distributed according to probability density function $\mathbb{P}$. The malicious samples are inserted into the training dataset by an adversary with the hope manipulating the trained model. These samples are  independently and identically distributed according to probability density function $\mathbb{Q}$. \textit{When training the machine learning model, we do not know either of these probability distributions.} We only know that a sample is correct with probability $1-\beta$ and malicious with probability $\beta$. That is, the ratio of the corrupted data entries is $\beta$.  
Hence, any given record  $(x_i,y_i)$ in the training dataset are independently and identically distributed according to $\mathbb{D}=(1-\beta)\mathbb{P}+\beta \mathbb{Q}$. We make the following standing assumption regarding the distribution of the training data. 

\begin{assumption} \label{assum:1} There exists constant $a>1$ such that $\mathbb{E}^{\mathbb{D}}\{\exp(\|\xi\|^a)\}<\infty$.
\end{assumption}

Assumption~\ref{assum:1} implies that $\mathbb{D}$, which is composed of $\mathbb{P}$ and $\mathbb{Q}$, is a light-tailed distribution. All probability distributions with compact (i.e., bounded and closed) support meet this condition. This assumption is often implicit in the machine learning literature as, for heavy-tailed distributions (i.e., distributions that do not meet Assumption~\ref{assum:1}), the sample average of the loss may not even converge to the expected loss in general~\cite{brownlees2015empirical,catoni2012challenging} and, as a result,~\eqref{eqn:optimization:2} might not be a good proxy for~\eqref{eqn:optimization:1}.

Let us define the empirical probability distribution
\begin{align*}
\widehat{\mathbb{D}}_n:=\frac{1}{n}\sum_{i=1}^n \delta_{(x_i,y_i)},
\end{align*}
where $\delta_\xi$ is the Dirac distribution function with its mass concentrated at $\xi$, i.e., $\int \delta_\xi(x)\mathrm{d}x=1$ and $\delta_\xi(x)=0$ for all $x\neq \xi$. Evidently,
\begin{align}
\frac{1}{n} \sum_{i=1}^n\ell(\mathfrak{M}(x_i;\theta),y_i)=\mathbb{E}^{\widehat{\mathbb{D}}_n} \{\ell(\mathfrak{M}(x;\theta),y)\},
\end{align}
and, as a result,~\eqref{eqn:optimization:2} can be rewritten as 
\begin{align} 
\hat{J}:=\inf_{\theta\in\Theta}  \mathbb{E}^{\widehat{\mathbb{D}}_n} \{\ell(\mathfrak{M}(x;\theta),y)\}.
\end{align}
By distributionally-robust machine learning, we mean that we 
would like to devise an algorithm for extracting a trained machine learning model  $\hat{\theta}_n\in\Theta$ such that 
\begin{align} \label{eqn:guarantee}
\mathbb{D}^n\{J^*\leq \mathbb{E}^{\mathbb{P}}\{\ell(\mathfrak{M}(x;\hat{\theta}_n),y)\}\leq \hat{J}_n\}\geq 1-\beta,
\end{align}
where $\beta\in(0,1)$ is a significance parameter with respect to distribution $\mathbb{D}^n$ and $\hat{J}_n$ is an upper bound that may depend on the training dataset $(x_i,y_i)_{i=1}^n$ and the trained model $\hat{\theta}_n$. Clearly, $\hat{J}_n$ and $\hat{\theta}_n$ both depend on the choice of significance parameter $\beta\in(0,1)$. 

In this paper, we extract the trained model by solving the  distributionally-robust optimization problem in
\begin{align} \label{eqn:optimization:3}
\hat{J}_n:=\inf_{\theta\in\Theta} \sup_{\mathbb{G}:\mathfrak{W}(\mathbb{G},\widehat{\mathbb{D}}_n)\leq \rho}  \mathbb{E}^{\mathbb{G}} \{\ell(\mathfrak{M}(x;\theta),y)\},
\end{align}
for some constant $\rho>0$. Note that $\hat{\theta}_n\in\Theta$ is the minimizer of~\eqref{eqn:optimization:3}. This problem formulation is motivated by that the distributions of the correct samples $\mathbb{P}$ is within a small neighbourhood of the empirical probability distribution $\widehat{\mathbb{D}}_n$ in the sense of the the Wasserstein distance if $n$ is large enough and $\beta$ is relatively small. Therefore, optimizing $\sup_{\mathbb{G}:\mathfrak{W}(\mathbb{G},\widehat{\mathbb{D}}_n)\leq \rho}  \mathbb{E}^{\mathbb{G}} \{\ell(\mathfrak{M}(x;\theta),y)\}$ can act as a surrogate for optimizing $\mathbb{E}^{\mathbb{P}} \{\ell(\mathfrak{M}(x;\theta),y)\}$. This is proved in the following lemma. 

\begin{lemma} \label{lemma:1} There exist constants $c_1,c_2>0$ such that 
	\begin{align*}
	\mathbb{D}^n\{\mathfrak{W}(\widehat{\mathbb{D}}_n,\mathbb{P})
	\leq \zeta(\gamma)+\beta\mathfrak{W}(\mathbb{Q},\mathbb{P})\}
	\geq 
	1- \gamma,
	\end{align*}
	where
	\begin{align*}
	\zeta(\gamma):=\begin{cases}
	\displaystyle\left(\frac{\log(c_1/\gamma)}{c_2n}\right)^{1/\max\{p,2\}}, & \displaystyle n\geq \frac{\log(c_1/\gamma)}{c_2}, \\
	\displaystyle\left(\frac{\log(c_1/\gamma)}{c_2n}\right)^{1/a}, & \displaystyle n< \frac{\log(c_1/\gamma)}{c_2},
	\end{cases}
	\end{align*}
	for all $n\geq 1$, $p=p_x+p_y\neq 2$, and $\gamma>0$.
\end{lemma}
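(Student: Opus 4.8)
The plan is to decompose the Wasserstein distance between the empirical distribution and the clean distribution $\mathbb{P}$ by inserting the mixture distribution $\mathbb{D}=(1-\beta)\mathbb{P}+\beta\mathbb{Q}$, which is the distribution the samples are actually drawn from. Since $\mathfrak{W}$ is a metric on $\mathcal{M}(\Xi)$, the triangle inequality gives
\[
\mathfrak{W}(\widehat{\mathbb{D}}_n,\mathbb{P})\leq \mathfrak{W}(\widehat{\mathbb{D}}_n,\mathbb{D})+\mathfrak{W}(\mathbb{D},\mathbb{P}).
\]
This isolates two effects: the sampling error $\mathfrak{W}(\widehat{\mathbb{D}}_n,\mathbb{D})$ between the empirical measure and its true generating distribution, and the contamination bias $\mathfrak{W}(\mathbb{D},\mathbb{P})$ introduced by the poisoning. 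I would handle these two terms separately.

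For the contamination term, I would use the Kantorovich--Rubinstein dual representation of Section~\ref{sec:Wasserstein}. Substituting $\mathbb{D}=(1-\beta)\mathbb{P}+\beta\mathbb{Q}$ into the dual form, the $(1-\beta)\int f\,\mathrm{d}\mathbb{P}$ contribution combines with $-\int f\,\mathrm{d}\mathbb{P}$ to leave
\[
\mathfrak{W}(\mathbb{D},\mathbb{P})=\sup_{f\in\mathbb{L}}\Bigl\{\beta\textstyle\int_\Xi f\,\mathrm{d}\mathbb{Q}-\beta\int_\Xi f\,\mathrm{d}\mathbb{P}\Bigr\}=\beta\,\mathfrak{W}(\mathbb{Q},\mathbb{P}),
\]
where $\beta\geq 0$ factors out of the supremum. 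This is in fact an exact identity rather than an inequality, and it produces the second term $\beta\mathfrak{W}(\mathbb{Q},\mathbb{P})$ appearing in the statement.

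For the sampling term, note that $\widehat{\mathbb{D}}_n$ is the empirical measure of $n$ i.i.d.\ draws from $\mathbb{D}$, so I would invoke a measure-concentration result for empirical distributions in the Wasserstein metric (of Fournier--Guillin type, as employed in~\cite{esfahani2018data}). Assumption~\ref{assum:1} supplies precisely the light-tail hypothesis $\mathbb{E}^{\mathbb{D}}\{\exp(\|\xi\|^a)\}<\infty$ that such results require, yielding constants $c_1,c_2>0$ with
\[
\mathbb{D}^n\{\mathfrak{W}(\widehat{\mathbb{D}}_n,\mathbb{D})\leq \zeta(\gamma)\}\geq 1-\gamma,
\]
where $\zeta(\gamma)$ has exactly the stated two-regime form: the $1/\max\{p,2\}$ exponent governs the large-$n$ regime $n\geq\log(c_1/\gamma)/c_2$, while the $1/a$ exponent, inherited from the tail parameter of Assumption~\ref{assum:1}, governs small $n$. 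Combining this high-probability bound with the deterministic identity for the contamination term through the triangle inequality yields the claim on the same event of probability at least $1-\gamma$.

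The analytic content of the lemma lives entirely in the concentration bound for $\mathfrak{W}(\widehat{\mathbb{D}}_n,\mathbb{D})$; the triangle-inequality split and the duality computation are short and elementary. The delicate point, which I expect to be the main obstacle, is verifying that the cited concentration theorem applies verbatim here---in particular that the data dimension enters as $p=p_x+p_y$, that the hypothesis $p\neq 2$ is what produces the clean $\max\{p,2\}$ exponent, and that the tail exponent $a$ in $\zeta$ is the same $a$ appearing in Assumption~\ref{assum:1}. Matching these constants and exponents to the hypotheses of the underlying theorem is the only real subtlety.
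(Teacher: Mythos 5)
Your proposal is correct and follows the same overall architecture as the paper's proof: the triangle inequality through the mixture $\mathbb{D}$, the Fournier--Guillin-type concentration bound for $\mathfrak{W}(\widehat{\mathbb{D}}_n,\mathbb{D})$ under Assumption~\ref{assum:1} (cited via~\cite{esfahani2018data}), and a bound on the contamination term. The one genuine difference is in how $\mathfrak{W}(\mathbb{D},\mathbb{P})$ is controlled: the paper invokes convexity of the Wasserstein distance in its arguments (citing~\cite[Lemma~2.1]{pflug2014multistage}) together with $\mathfrak{W}(\mathbb{P},\mathbb{P})=0$ to obtain the \emph{inequality} $\mathfrak{W}(\mathbb{D},\mathbb{P})\leq\beta\,\mathfrak{W}(\mathbb{Q},\mathbb{P})$, whereas you obtain the exact \emph{identity} $\mathfrak{W}(\mathbb{D},\mathbb{P})=\beta\,\mathfrak{W}(\mathbb{Q},\mathbb{P})$ directly from the Kantorovich--Rubinstein dual form already stated in Section~\ref{sec:Wasserstein}, using only that the nonnegative scalar $\beta$ factors out of the supremum. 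Your route is slightly more self-contained (no external convexity lemma needed) and yields a marginally sharper intermediate fact, though either version suffices for the stated lemma; everything else, including the identification of $p=p_x+p_y$, the role of $p\neq 2$, and the tail exponent $a$, matches the paper.
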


\begin{proof}
Following~\cite{fournier2015rate} and~\cite{esfahani2018data}, we know that 
$\mathbb{D}^n\{\mathfrak{W}(\widehat{\mathbb{D}}_n,\mathbb{D})
\leq \zeta(\gamma)\}
\leq 1-\gamma.$
Now, note that
\begin{align}
\mathfrak{W}(\mathbb{D},\mathbb{P})
&=\mathfrak{W}((1-\beta)\mathbb{P}+\beta \mathbb{Q},\mathbb{P})\nonumber\\
&\leq (1-\beta)\mathfrak{W}(\mathbb{P},\mathbb{P})+\beta \mathfrak{W}(\mathbb{Q},\mathbb{P})\nonumber\\
&\leq \beta \mathfrak{W}(\mathbb{Q},\mathbb{P}),\label{eqn:proof:1}
\end{align}
where the first inequality follows from the convexity of the Wasserstein distance~\cite[Lemma~2.1]{pflug2014multistage} and the second inequality follows from that $\mathfrak{W}(\mathbb{P},\mathbb{P})=0$. The triangle inequality for the Wasserstein distance~\cite[p.\,170]{prugel2020probability} implies that
$\mathfrak{W}(\widehat{\mathbb{D}}_n,\mathbb{P})
\leq \mathfrak{W}(\widehat{\mathbb{D}}_n,\mathbb{D})+\mathfrak{W}(\mathbb{D},\mathbb{P}),$
and, using the inequality in~\eqref{eqn:proof:1}, $
\mathfrak{W}(\widehat{\mathbb{D}}_n,\mathbb{P})
\leq \mathfrak{W}(\widehat{\mathbb{D}}_n,\mathbb{D})+\beta \mathfrak{W}(\mathbb{Q},\mathbb{P}).$
Therefore, $\mathfrak{W}(\widehat{\mathbb{D}}_n,\mathbb{P})
\leq \zeta(\gamma)+\beta\mathfrak{W}(\mathbb{D},\mathbb{P})$ if 
$\mathfrak{W}(\widehat{\mathbb{D}}_n,\mathbb{D})
\leq \zeta(\gamma)$, which implies that $
\mathbb{D}^n\{\mathfrak{W}(\widehat{\mathbb{D}}_n,\mathbb{P})
\leq \zeta(\gamma)+\beta\mathfrak{W}(\mathbb{Q},\mathbb{P})\}
\geq 
1- \gamma.$
This concludes the proof. 
\end{proof}

Now, that we know that the distributions of the correct samples $\mathbb{P}$ is within a small neighbourhood of the empirical probability distribution $\widehat{\mathbb{D}}_n$ (defined by the Wasserstein distance) with a high probability, the solution of~\eqref{eqn:optimization:3} should provide a bound for the expected loss function in the sense of~\eqref{eqn:guarantee}. This is proved in the following theorem. 

\begin{theorem}
Assume that  $\rho=\zeta(\beta) +\beta\mathfrak{W}(\mathbb{Q},\mathbb{P})$ in~\eqref{eqn:optimization:3}. Then,~\eqref{eqn:guarantee} holds. 
\end{theorem}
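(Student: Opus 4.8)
The plan is to split the two-sided event in~\eqref{eqn:guarantee} into the lower bound $J^*\leq \mathbb{E}^{\mathbb{P}}\{\ell(\mathfrak{M}(x;\hat{\theta}_n),y)\}$ and the upper bound $\mathbb{E}^{\mathbb{P}}\{\ell(\mathfrak{M}(x;\hat{\theta}_n),y)\}\leq \hat{J}_n$, and to observe that the former holds with certainty while only the latter requires the concentration supplied by Lemma~\ref{lemma:1}.

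First I would dispose of the lower bound, which holds for every realization of the training data. Since $\hat{\theta}_n\in\Theta$ is feasible and $J^*=\inf_{\theta\in\Theta}\mathbb{E}^{\mathbb{P}}\{\ell(\mathfrak{M}(x;\theta),y)\}$ is an infimum over $\Theta$, evaluating the objective at the particular point $\hat{\theta}_n$ cannot fall below the infimum, giving $J^*\leq \mathbb{E}^{\mathbb{P}}\{\ell(\mathfrak{M}(x;\hat{\theta}_n),y)\}$ deterministically. This costs nothing from the probability budget.

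Next I would specialize Lemma~\ref{lemma:1} to $\gamma=\beta$. With the prescribed radius $\rho=\zeta(\beta)+\beta\mathfrak{W}(\mathbb{Q},\mathbb{P})$ this yields $\mathbb{D}^n\{\mathfrak{W}(\widehat{\mathbb{D}}_n,\mathbb{P})\leq \rho\}\geq 1-\beta$. Let $E$ denote the event $\{\mathfrak{W}(\widehat{\mathbb{D}}_n,\mathbb{P})\leq \rho\}$; on $E$ the clean-data distribution $\mathbb{P}$ is feasible for the inner maximization in~\eqref{eqn:optimization:3}. I would then chain, on $E$, the inequality $\mathbb{E}^{\mathbb{P}}\{\ell(\mathfrak{M}(x;\hat{\theta}_n),y)\}\leq \sup_{\mathbb{G}:\mathfrak{W}(\mathbb{G},\widehat{\mathbb{D}}_n)\leq \rho}\mathbb{E}^{\mathbb{G}}\{\ell(\mathfrak{M}(x;\hat{\theta}_n),y)\}$, which holds because $\mathbb{P}$ is one of the competitors in the supremum, with the identity that the right-hand side equals $\hat{J}_n$ because $\hat{\theta}_n$ is by definition the minimizer of~\eqref{eqn:optimization:3}. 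This delivers the upper bound on $E$.

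Combining the two parts, on the event $E$ both inequalities hold simultaneously, and since the lower bound holds everywhere the two-sided event in~\eqref{eqn:guarantee} contains $E$; hence its probability is at least $\mathbb{D}^n(E)\geq 1-\beta$. I do not anticipate a genuine obstacle here: the only point demanding care is the bookkeeping that the lower bound is deterministic and therefore does not consume confidence, so a single invocation of Lemma~\ref{lemma:1} at level $\gamma=\beta$ suffices and no union bound is incurred. One should also confirm that the minimizer $\hat{\theta}_n$ is attained (or otherwise pass to an arbitrarily close near-minimizer) so that the supremum in~\eqref{eqn:optimization:3} can legitimately be identified with $\hat{J}_n$ at $\hat{\theta}_n$.
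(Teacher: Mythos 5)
Your argument is correct and is essentially the same one the paper relies on: the paper's proof of this theorem simply cites \cite[Theorem 3.4]{esfahani2018data}, and the argument you spell out (deterministic lower bound from the infimum, plus Lemma~\ref{lemma:1} at $\gamma=\beta$ making $\mathbb{P}$ feasible for the inner supremum) is exactly the reasoning the paper itself writes out when proving the analogous guarantee for the regularized problem~\eqref{eqn:optimization:4}. Your caveat about attainment of the minimizer is a reasonable point of care that the paper glosses over.
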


\begin{proof}
The proof follows from~\cite[Theorem 3.4]{esfahani2018data}. 
\end{proof}

We can solve~\eqref{eqn:optimization:3} explicitly following the approach in~\cite{esfahani2018data}. However, in this paper, we show that we can simplify the aforementioned distributionally-robust optimization problem by finding an upper bound for the worst-case expected fitness  $\sup_{\mathbb{G}:\mathfrak{W}(\mathbb{G},\widehat{\mathbb{D}}_n)\leq \rho}  \mathbb{E}^{\mathbb{G}} \{\ell(\mathfrak{M}(x;\theta),y)\}$. This is done in the following lemma. 

\begin{lemma} \label{lemma:2} Assume that $\ell(\mathfrak{M}(x;\theta),y)$ is $L(\theta)$-Lipschitz continuous in $(x,y)$ for a fixed $\theta\in\Theta$. Then, 
\begin{align*}
\sup_{\mathbb{G}:\mathfrak{W}(\mathbb{G},\widehat{\mathbb{D}}_n)\leq \rho}  \mathbb{E}^{\mathbb{G}} \{\ell(&\mathfrak{M}(x;\theta),y)\}\leq \mathbb{E}^{\widehat{\mathbb{D}}_n} \{\ell(\mathfrak{M}(x;\theta),y)\}
+L(\theta)\rho.
\end{align*}
\end{lemma}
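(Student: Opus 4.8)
The plan is to reduce the supremum over the Wasserstein ball to the Kantorovich--Rubinstein dual characterization of $\mathfrak{W}$ stated in Section~\ref{sec:Wasserstein}, and then to exploit the Lipschitz hypothesis directly. Write $\xi=(x,y)$ and $g_\theta(\xi):=\ell(\mathfrak{M}(x;\theta),y)$. The assumption that $\ell(\mathfrak{M}(x;\theta),y)$ is $L(\theta)$-Lipschitz in $(x,y)$ means precisely that $|g_\theta(\xi_1)-g_\theta(\xi_2)|\leq L(\theta)\|\xi_1-\xi_2\|$, so the rescaled function $g_\theta/L(\theta)$ satisfies $|(g_\theta/L(\theta))(\xi_1)-(g_\theta/L(\theta))(\xi_2)|\leq\|\xi_1-\xi_2\|$ and hence lies in the class $\mathbb{L}$ of $1$-Lipschitz test functions. (If $L(\theta)=0$ the loss is constant in $\xi$ and both sides of the claimed inequality coincide, so we may assume $L(\theta)>0$.)

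Next, I would fix an arbitrary feasible $\mathbb{G}$ with $\mathfrak{W}(\mathbb{G},\widehat{\mathbb{D}}_n)\leq\rho$ and apply the dual formula. Because $g_\theta/L(\theta)\in\mathbb{L}$, this single test function cannot exceed the supremum defining $\mathfrak{W}$, which gives
\begin{align*}
\frac{1}{L(\theta)}\Big(\mathbb{E}^{\mathbb{G}}\{g_\theta(\xi)\}-\mathbb{E}^{\widehat{\mathbb{D}}_n}\{g_\theta(\xi)\}\Big)\leq \mathfrak{W}(\mathbb{G},\widehat{\mathbb{D}}_n)\leq\rho.
\end{align*}
Multiplying through by $L(\theta)>0$ and rearranging yields the per-distribution bound $\mathbb{E}^{\mathbb{G}}\{g_\theta(\xi)\}\leq\mathbb{E}^{\widehat{\mathbb{D}}_n}\{g_\theta(\xi)\}+L(\theta)\rho$. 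Since the right-hand side does not depend on $\mathbb{G}$, taking the supremum over all $\mathbb{G}$ in the ball $\{\mathbb{G}:\mathfrak{W}(\mathbb{G},\widehat{\mathbb{D}}_n)\leq\rho\}$ preserves the inequality and produces exactly the claimed estimate.

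The only point that needs care, and which I regard as the main (though mild) obstacle, is the admissibility of $g_\theta/L(\theta)$ as a test function in the dual representation: the expectations appearing above must be finite. This follows from the Lipschitz hypothesis, which forces at most linear growth $|g_\theta(\xi)|\leq|g_\theta(\xi_0)|+L(\theta)\|\xi-\xi_0\|$ for a fixed reference point $\xi_0$, so that $\mathbb{E}^{\mathbb{G}}\{|g_\theta(\xi)|\}<\infty$ for every $\mathbb{G}\in\mathcal{M}(\Xi)$ by the defining moment condition $\mathbb{E}^{\mathbb{G}}\{\|\xi\|\}<\infty$; the empirical distribution $\widehat{\mathbb{D}}_n$ is finitely supported and causes no difficulty. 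With integrability secured, the argument is a one-line application of duality, and no further structure of $\ell$, $\mathfrak{M}$, or $\Theta$ is required.
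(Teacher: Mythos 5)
Your proof is correct, but it reaches the bound by a different route than the paper. You invoke the Kantorovich--Rubinstein dual characterization of $\mathfrak{W}$ (stated in Section~\ref{sec:Wasserstein}): since $g_\theta/L(\theta)$ is an admissible $1$-Lipschitz test function, the difference of expectations under $\mathbb{G}$ and $\widehat{\mathbb{D}}_n$ is at most $L(\theta)\,\mathfrak{W}(\mathbb{G},\widehat{\mathbb{D}}_n)\leq L(\theta)\rho$, and the bound is uniform in $\mathbb{G}$. The paper instead works with the primal (coupling) definition: it picks a transport plan $\Pi$ with marginals $\mathbb{G}$ and $\widehat{\mathbb{D}}_n$, rewrites the difference of expectations as $\int (g_\theta(\xi)-g_\theta(\xi'))\,\Pi(\mathrm{d}\xi,\mathrm{d}\xi')$, bounds the integrand pointwise by $L(\theta)\|\xi-\xi'\|$, and integrates to get $L(\theta)$ times the transport cost. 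These are two faces of the same estimate --- unpacking the ``easy'' direction of the duality you cite reproduces exactly the paper's coupling computation --- so neither argument is more general. What your version buys is brevity and a clean reduction to a stated formula; what the paper's version buys is self-containedness, since it needs only the primal definition of $\mathfrak{W}$ and not the duality theorem. Your attention to the two edge cases (the degenerate case $L(\theta)=0$ and the integrability of $g_\theta$ under $\mathbb{G}$, which the linear-growth bound and the moment condition defining $\mathcal{M}(\Xi)$ secure) is a point of care that the paper's proof glosses over; conversely, note that the paper's proof also leaves implicit the step of choosing $\Pi$ to be an optimal or near-optimal plan so that the transport cost is at most $\rho$.
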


\begin{proof}First, note that
\begin{align*}
\mathbb{E}^{\mathbb{G}} \{\ell(\mathfrak{M}(x;\theta),y)\}
=&\mathbb{E}^{\widehat{\mathbb{D}}_n}\{\ell(\mathfrak{M}(x;\theta),y)
\}\\&-\mathbb{E}^{\widehat{\mathbb{D}}_n}\{\ell(\mathfrak{M}(x;\theta),y)\}+\mathbb{E}^{\mathbb{G}} \{\ell(\mathfrak{M}(x;\theta),y)\}\\
\leq & \mathbb{E}^{\widehat{\mathbb{D}}_n}\{\ell(\mathfrak{M}(x;\theta),y)\}
\\&+|\mathbb{E}^{\mathbb{G}} \{\ell(\mathfrak{M}(x;\theta),y)\}-\mathbb{E}^{\widehat{\mathbb{D}}_n}\{\ell(\mathfrak{M}(x;\theta),y)|.
\end{align*}
We have
\begin{align*}
|\mathbb{E}^{\mathbb{G}}\{\ell(\mathfrak{M}(x;\theta),y)\}
-&\mathbb{E}^{\widehat{\mathbb{D}}_n}\{\ell(\mathfrak{M}(x;\theta),y)\}|\\
= &\Bigg|\int_{x,y} \ell(\mathfrak{M}(x;\theta),y)\mathbb{G}(\mathrm{d}x,\mathrm{d}y) \\
&-\int_{x',y'} \ell(\mathfrak{M}(x';\theta),y')\widehat{\mathbb{D}}_n(\mathrm{d}x',\mathrm{d}y') \Bigg|\\
= &\Bigg|\int_{x,y} \ell(\mathfrak{M}(x;\theta),y)\int_{x',y'}\Pi(\mathrm{d}x,\mathrm{d}y,\mathrm{d}x',\mathrm{d}y') \\
&-\int_{x',y'} \ell(\mathfrak{M}(x';\theta),y')\int_{x,y}\Pi(\mathrm{d}x,\mathrm{d}y,\mathrm{d}x',\mathrm{d}y') \Bigg|\\
= &\Bigg|\int_{x,y,x',y'} (\ell(\mathfrak{M}(x;\theta),y)-\ell(\mathfrak{M}(x';\theta),y'))\\
&\hspace{1in}\times \Pi(\mathrm{d}x,\mathrm{d}y,\mathrm{d}x',\mathrm{d}y') \Bigg|,
\end{align*}
where $\Pi$ is a joint disribution on $(x,y)$ and $(x',y')$ with marginals $\mathbb{G}$ and $\widehat{\mathbb{D}}_n$, respectively. Therefore, 
\begin{align*}
\Big|\mathbb{E}^{\mathbb{G}}\{\ell(\mathfrak{M}(x;\theta),y)\}
-\mathbb{E}^{\widehat{\mathbb{D}}_n}&\{\ell(\mathfrak{M}(x;\theta),y)\}\Big|\\
\leq & \int_{x,y,x',y'} |\ell(\mathfrak{M}(x;\theta),y)-\ell(\mathfrak{M}(x';\theta),y')|\\
&\hspace{1in}\times \Pi(\mathrm{d}x,\mathrm{d}y,\mathrm{d}x',\mathrm{d}y')\\
\leq & \int_{x,y,x',y'} \frac{|\ell(\mathfrak{M}(x;\theta),y)-\ell(\mathfrak{M}(x';\theta),y')|}{\|(x,y)-(x',y')\|}\\
&\hspace{.3in}\times\|(x,y)-(x',y')\| \Pi(\mathrm{d}x,\mathrm{d}y,\mathrm{d}x',\mathrm{d}y')\\
\leq & L(\theta)\int_{x,y,x',y'} \hspace{-.2in}\|(x,y)-(x',y')\| \Pi(\mathrm{d}x,\mathrm{d}y,\mathrm{d}x',\mathrm{d}y')\\
\leq & L(\theta)\rho.
\end{align*}
This concludes the proof.
\end{proof}

For continuously-differentiable $\ell(\mathfrak{M}(x;\theta),y)$, we can compute the Lipschitz constant based on the gradients. However, continuous differentiability is not required for Lipschitz continuity~\cite[p.\,61]{reddy1998introductory}. We explore continuously-differentiable loss functions in the next lemma. 

\begin{lemma} \label{lemma:3} $\ell(\mathfrak{M}(x;\theta),y)$ is $L(\theta)$-Lipschitz continuous in $(x,y)$ with 
	\begin{align*}
	L(\theta):=\sup_{(x,y)\in\supp(\mathbb{D})}\|\nabla_{(x,y)}\ell(\mathfrak{M}(x;\theta),y)\|_*.
	\end{align*}
\end{lemma}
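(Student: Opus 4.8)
The plan is to reduce the statement to the classical fact that a continuously differentiable scalar function has a Lipschitz constant bounded by the supremum of the dual norm of its gradient, and to establish that fact by integrating the gradient along the straight line joining two points. Writing $g(\xi):=\ell(\mathfrak{M}(x;\theta),y)$ with $\xi=(x,y)$ for a fixed $\theta\in\Theta$, I want to show that $|g(\xi_1)-g(\xi_2)|\le L(\theta)\|\xi_1-\xi_2\|$ for all admissible $\xi_1,\xi_2$, where $\|\cdot\|$ is the norm used in the definition of the Wasserstein distance and $\|\cdot\|_*$ is its dual.

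First I would fix $\xi_1,\xi_2$ and parametrize the segment joining them by $\xi(t):=\xi_2+t(\xi_1-\xi_2)$ for $t\in[0,1]$. Because $g$ is continuously differentiable, applying the fundamental theorem of calculus to the scalar map $t\mapsto g(\xi(t))$ gives
\begin{align*}
g(\xi_1)-g(\xi_2)=\int_0^1 \nabla g(\xi(t))^\top(\xi_1-\xi_2)\,\mathrm{d}t.
\end{align*}
Taking absolute values and moving them inside the integral yields the bound $|g(\xi_1)-g(\xi_2)|\le \int_0^1 |\nabla g(\xi(t))^\top(\xi_1-\xi_2)|\,\mathrm{d}t$.

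The second step is to invoke the defining inequality of the dual norm, $|a^\top b|\le \|a\|_*\,\|b\|$, with $a=\nabla g(\xi(t))$ and $b=\xi_1-\xi_2$. This bounds the integrand by $\|\nabla g(\xi(t))\|_*\,\|\xi_1-\xi_2\|$, and since $\|\xi_1-\xi_2\|$ does not depend on $t$ it factors out of the integral. Replacing $\|\nabla g(\xi(t))\|_*$ by its supremum along the segment, and then by the supremum over $\supp(\mathbb{D})$, gives $|g(\xi_1)-g(\xi_2)|\le L(\theta)\,\|\xi_1-\xi_2\|$, which is exactly the claimed Lipschitz bound with the stated constant.

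The step I expect to be the main obstacle is the domain over which the supremum is taken: the segment $\xi(t)$ joining two points of $\supp(\mathbb{D})$ need not itself lie in $\supp(\mathbb{D})$ unless the support is convex, yet the previous bound controls the integrand only through the gradient evaluated along the whole segment. The clean fixes are either to assume that $\supp(\mathbb{D})$ is convex, or to take the supremum defining $L(\theta)$ over the convex hull of the support (or over all of $\mathbb{R}^{p_x+p_y}$), at the cost of a possibly larger but still valid Lipschitz constant. Continuous differentiability of $\ell(\mathfrak{M}(\cdot;\theta),\cdot)$ is precisely what legitimizes both the fundamental-theorem-of-calculus step and the pointwise existence of the gradient used in the dual-norm estimate.
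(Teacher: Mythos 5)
Your argument is essentially the paper's own proof: the paper invokes the mean value theorem to produce a single intermediate point $(x'',y'')$ on the segment joining $(x,y)$ and $(x',y')$ and then applies the dual-norm inequality $|a^\top b|\le\|a\|_*\|b\|$, while you integrate the gradient along the same segment, which is the identical estimate in integral form. The caveat you flag --- that the segment need not lie in $\supp(\mathbb{D})$ unless the support is convex, so the supremum defining $L(\theta)$ may be taken over too small a set --- is genuine, but it afflicts the paper's proof in exactly the same way, so your proposal is, if anything, the more careful of the two.
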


\begin{proof}
If $\ell(\mathfrak{M}(x;\theta),y)$ is continuously differentiable in $(x,y)$, the mean value theorem~\cite{doi1010800025570X197911976771} implies that, there exists $(x'',y'')$ over the convex combination of $(x,y)$ and $(x',y')$ such that 
\begin{align*}
|\ell(\mathfrak{M}(x;\theta),y)-\ell(\mathfrak{M}(x';\theta),y')|
&
\leq \Big|\nabla_{(x,y)}\ell(\mathfrak{M}(x'';\theta),y'')^\top((x,y)-(x',y')) \Big|\\
&
\leq 
\|\nabla_{(x,y)}\ell(\mathfrak{M}(x'';\theta),y'')\|_* \|(x,y)-(x',y')\|,
\end{align*}
where $\|\cdot\|_*$ is the dual norm of $\|\cdot\|$. Hence, we can show that $\ell(\mathfrak{M}(x;\theta),y)$ is $L(\theta)$-Lipschitz continuous in $(x,y)$.
\end{proof}

Note that we can select any function of the model parameters $L(\theta)$ that is larger than the norm of the gradient  $\sup_{(x,y)\in\supp(\mathbb{D})}\|\nabla_{(x,y)}\ell(\mathfrak{M}(x;\theta),y)\|_*$ in Lemma~\ref{lemma:3} as the Lipschitz constant, and thus as the regularizer. The choice in Lemma~\ref{lemma:3} is merely the tightest Lipschitz constant for continuously-differentiable loss functions. 

Now, we can define the regularized sample-averaged optimization problem in 
\begin{align} \label{eqn:optimization:4}
\hat{J}_n:=\inf_{\theta\in\Theta}  \Big[\mathbb{E}^{\widehat{\mathbb{D}}_n} \{\ell(\mathfrak{M}(x;\theta),y)\}+\rho L(\theta)\Big].
\end{align}
We can similarly define $\hat{\theta}_n\in\Theta$ as the minimizer of~\eqref{eqn:optimization:4}.  
This optimization problem no longer involves taking a supremum over the probability density function and is thus computationally much easier to solve in compassion to~\eqref{eqn:optimization:3} (as it does not require solving an infinite-dimensional optimization problem). We can still prove a performance guarantee for the optimizer of~\eqref{eqn:optimization:4} in the sense of~\eqref{eqn:guarantee}. This is done in the next theorem. 

\begin{theorem}
	Assume that  $\rho=\zeta(\beta) +\beta\mathfrak{W}(\mathbb{Q},\mathbb{P})$ in~\eqref{eqn:optimization:4}. Then,~\eqref{eqn:guarantee} holds. 
\end{theorem}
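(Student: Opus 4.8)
The plan is to split the two-sided bound in~\eqref{eqn:guarantee} and to observe that only the upper bound requires probabilistic reasoning. For the lower bound, note that $\hat{\theta}_n\in\Theta$ is a feasible parameter for~\eqref{eqn:optimization:1}, so by the very definition of the infimum, $J^*\leq \mathbb{E}^{\mathbb{P}}\{\ell(\mathfrak{M}(x;\hat{\theta}_n),y)\}$ holds deterministically for every realization of the training data. Thus it remains only to establish $\mathbb{E}^{\mathbb{P}}\{\ell(\mathfrak{M}(x;\hat{\theta}_n),y)\}\leq \hat{J}_n$ on an event of $\mathbb{D}^n$-probability at least $1-\beta$.

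Next I would invoke Lemma~\ref{lemma:1} with the choice $\gamma=\beta$. Since $\rho=\zeta(\beta)+\beta\mathfrak{W}(\mathbb{Q},\mathbb{P})$ is exactly the radius appearing in that lemma, Lemma~\ref{lemma:1} yields
\begin{align*}
\mathbb{D}^n\{\mathfrak{W}(\widehat{\mathbb{D}}_n,\mathbb{P})\leq \rho\}\geq 1-\beta.
\end{align*}
Call this favourable event $\mathcal{E}$. On $\mathcal{E}$ the clean distribution $\mathbb{P}$ lies inside the Wasserstein ball $\{\mathbb{G}:\mathfrak{W}(\mathbb{G},\widehat{\mathbb{D}}_n)\leq\rho\}$, i.e.\ it is feasible for the inner supremum of~\eqref{eqn:optimization:3}.

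Working pathwise on $\mathcal{E}$ and fixing $\theta=\hat{\theta}_n$, feasibility of $\mathbb{P}$ gives
\begin{align*}
\mathbb{E}^{\mathbb{P}}\{\ell(\mathfrak{M}(x;\hat{\theta}_n),y)\}
\leq \sup_{\mathbb{G}:\mathfrak{W}(\mathbb{G},\widehat{\mathbb{D}}_n)\leq\rho}\mathbb{E}^{\mathbb{G}}\{\ell(\mathfrak{M}(x;\hat{\theta}_n),y)\}.
\end{align*}
Applying Lemma~\ref{lemma:2} at the fixed parameter $\hat{\theta}_n$ bounds the right-hand side by $\mathbb{E}^{\widehat{\mathbb{D}}_n}\{\ell(\mathfrak{M}(x;\hat{\theta}_n),y)\}+\rho L(\hat{\theta}_n)$, and since $\hat{\theta}_n$ is by definition the minimizer of~\eqref{eqn:optimization:4}, this last quantity is precisely $\hat{J}_n$. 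Chaining these relations delivers $\mathbb{E}^{\mathbb{P}}\{\ell(\mathfrak{M}(x;\hat{\theta}_n),y)\}\leq \hat{J}_n$ on $\mathcal{E}$, which together with the deterministic lower bound proves~\eqref{eqn:guarantee}.

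The main subtlety to handle carefully is that $\hat{\theta}_n$ is defined through the \emph{regularized surrogate}~\eqref{eqn:optimization:4} rather than through the distributionally-robust program~\eqref{eqn:optimization:3}; the argument works precisely because Lemma~\ref{lemma:2} holds for \emph{every} fixed $\theta$ (in particular for the data-dependent $\hat{\theta}_n$), and because the objective $\mathbb{E}^{\widehat{\mathbb{D}}_n}\{\ell\}+\rho L(\cdot)$ evaluated at its own minimizer equals $\hat{J}_n$ by construction. I would also make explicit that the event $\mathcal{E}$ and the estimator $\hat{\theta}_n$ are both measurable functions of the same sample, so the chain of inequalities is valid realization-by-realization and no independence between $\mathcal{E}$ and $\hat{\theta}_n$ is required; this is the only place where one must be slightly cautious, and it is exactly the structure underlying~\cite[Theorem 3.4]{esfahani2018data}.
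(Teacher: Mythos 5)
Your proposal is correct and follows essentially the same route as the paper's own proof: apply Lemma~\ref{lemma:1} with $\gamma=\beta$ to place $\mathbb{P}$ in the Wasserstein ball of radius $\rho$ with probability at least $1-\beta$, use feasibility of $\mathbb{P}$ in the inner supremum, and then invoke Lemma~\ref{lemma:2} at $\hat{\theta}_n$ to reach $\hat{J}_n$. Your additional remarks on the deterministic lower bound $J^*\leq \mathbb{E}^{\mathbb{P}}\{\ell(\mathfrak{M}(x;\hat{\theta}_n),y)\}$ and on measurability are sound elaborations of steps the paper leaves implicit.
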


\begin{proof}
The proof is similar to~\cite[Theorem 3.4]{esfahani2018data} with an extra step with the aid of Lemma~\ref{lemma:2}. By selecting $\rho=\zeta(\beta) +\beta\mathfrak{W}(\mathbb{Q},\mathbb{P})$, according to Lemma~\ref{lemma:1}, $\mathbb{P}$ belongs to a ball around $\widehat{\mathbb{D}}_n$ with radius $\rho$ with probability greater than or equal to $1-\beta$. Therefore, with probability of at least $1-\beta$, $\mathbb{E}^{\mathbb{P}} \{\ell(\mathfrak{M}(x;\hat{\theta}_n),y)\}\leq \sup_{\mathbb{G}:\mathfrak{W}(\mathbb{G}, \widehat{\mathbb{D}}_n)\leq \rho} \mathbb{E}^{\mathbb{G}} \{\ell(\mathfrak{M}(x;\hat{\theta}_n),y)\}$. Further, Lemma~\ref{lemma:2} states that  $\sup_{\mathbb{G}:\mathfrak{W}(\mathbb{G}, \widehat{\mathbb{D}}_n)\leq \rho} \linebreak \mathbb{E}^{\mathbb{G}} \{\ell(\mathfrak{M}(x;\hat{\theta}_n),y)\}\leq \hat{J}_n$. Therefore, with probability of at least $1-\beta$,  we have $\mathbb{E}^{\mathbb{P}} \{\ell(\mathfrak{M}(x;\hat{\theta}_n),y)\}\leq\hat{J}_n$. This concludes the proof. 
\end{proof}

Although the results of this paper hold for any machine learning problem, convex or non-convex, we simplify the regularizer for linear and logistic regression models for demonstration purposes. This is the topic of the next section.

\section{Application to Regression}
\label{sec:regression}
We start by investigating linear regression and then continue to logistic regression. 

\subsection{Linear Regression}
We start with linear regression. Without loss of generality, assume $p_y=1$. If  $p_y>1$, we can treat each output independently.  In this case, 
\begin{align*}
\mathfrak{M}(x;\theta)=\theta^\top 
\begin{bmatrix}
x \\
1
\end{bmatrix},
\end{align*}
and the fitness function is
\begin{align} \label{eqn:quadratic}
\ell(\mathfrak{M}(x;\theta),y)=\frac{1}{2}(\mathfrak{M}(x;\theta)-y)^2.
\end{align}
We also assume that $(x,y)$ belong to compact set $\mathcal{X}\times\mathcal{Y}\subseteq\mathbb{R}^{p_x}\times\mathbb{R}$.
Following Lemma~\ref{lemma:3}, we have
\begin{align} \label{eqn:linear_regression_L}
L(\theta)=\max_{(x,y)\in \mathcal{X}\times\mathcal{Y}}\left|\theta^\top 
\begin{bmatrix}
x \\
1
\end{bmatrix}-y\right| \left\|W\theta \right\|_*,
\end{align}
where 
\begin{align*}
W:=\begin{bmatrix}
I_{p_x} & 0_{p_x\times 1} \\
0_{1 \times p_x}  & 0
\end{bmatrix}.
\end{align*}
The regularizer in~\eqref{eqn:linear_regression_L} is the tightest that we can use in~\eqref{eqn:optimization:4}. However, it requires solving an optimization problem for computing the regularizer which could lead to  the use of additional computational resources. 
We can further simplify this Lipschitz constant by noting that
\begin{align*}
\max_{(x,y)\in \mathcal{X}\times\mathcal{Y}}\left|\theta^\top 
\begin{bmatrix}
x \\
1
\end{bmatrix}\hspace{-.03in}-\hspace{-.02in}y\right|
\leq &
\max_{(x,y)\in \mathcal{X}\times\mathcal{Y}}\left|\theta^\top 
\begin{bmatrix}
x \\
1
\end{bmatrix}\right|+|y|\\
\leq &
\max_{(x,y)\in \mathcal{X}\times\mathcal{Y}}\|\theta\|_*(\|x\|\hspace{-.02in}+\hspace{-.02in}1)
\hspace{-.02in}+\hspace{-.02in}|y|,
\end{align*}
and
\begin{align*}
\left\|W\theta \right\|_*
\leq \|\theta\|_*.
\end{align*}
Therefor, we get a more conservative (i.e., larger) Lipschitz constant:
\begin{align*}
L(\theta)=(X+1)\|\theta\|_*^2+Y \|\theta \|_*,
\end{align*}
where $X=\max_{x\in \mathcal{X}}\|x\|$ and $Y=\max_{y\in\mathcal{Y}}|y|$. 
Therefore, for linear regression, the adversarially-robust optimization problem in~\eqref{eqn:optimization:4} can be rewritten as 
\begin{align} \label{eqn:optimization:4_regression_quadratic}
\hat{J}_n:=\inf_{\theta\in\Theta}  \Bigg[ \frac{1}{n} \sum_{i=1}^n&\ell(\mathfrak{M}(x_i;\theta),y_i)+\rho (X+1)\|\theta\|_*^2+\rho Y \|\theta \|_*\Bigg].
\end{align}
An alternative to the quadratic loss function in~\eqref{eqn:quadratic} is 
\begin{align}
\ell(\mathfrak{M}(x;\theta),y)=|\mathfrak{M}(x;\theta)-y|.
\end{align}
In this case, we get
$
L(\theta)=\left\|\theta \right\|_*$ and the optimization problem in~\eqref{eqn:optimization:4} can be rewritten as 
\begin{align} \label{eqn:optimization:4_regression_linear}
\hat{J}_n:=\inf_{\theta\in\Theta}  \Bigg[ &\frac{1}{n} \sum_{i=1}^n\ell(\mathfrak{M}(x_i;\theta),y_i)+\rho \|\theta\|_*^2\Bigg].
\end{align}
Now, we move on to the logistic regression.

\subsection{Logistic Regression}
In this case, we have
\begin{align*}
\mathfrak{M}(x;\theta)=\frac{1}{1+\exp(-[x^\top \; 1]\theta)},
\end{align*}
and the fitness function is
\begin{align}
\ell(\mathfrak{M}(x;\theta),y)=&-y\log(\mathfrak{M}(x;\theta))\nonumber
\\
&-(1-y)\log(1-\mathfrak{M}(x;\theta)).
\end{align}
Note the opposite sign with the log likelihood as here we minimize the expected fitness. We have
\begin{align*}
\nabla_y \ell(\mathfrak{M}(x;\theta),y)
=&-\log(\mathfrak{M}(x;\theta))-\log(1-\mathfrak{M}(x;\theta))\\
=&-\log\left(\frac{\mathfrak{M}(x;\theta)}{1-\mathfrak{M}(x;\theta)}\right)\\
=&-[x^\top \; 1]\theta,
\end{align*}
and
\begin{align*}
\nabla_x \ell(\mathfrak{M}(x;\theta),y)
=&-\Bigg(y -\frac{1}{1+\exp(-[x^\top \; 1]\theta)}\Bigg)W\theta.
\end{align*}
Therefore, we get
\begin{align*}
L(\theta)=(Y+X+2)\|\theta\|_*.
\end{align*}
The robust optimization problem in~\eqref{eqn:optimization:4} becomes
\begin{align} \label{eqn:optimization:4_logistic_regression}
\hat{J}_n:=\inf_{\theta\in\Theta}  \Bigg[ \frac{1}{n} \sum_{i=1}^n&\ell(\mathfrak{M}(x_i;\theta),y_i)+\rho (Y+X+2)\|\theta\|_*\Bigg]\hspace{-.02in}.
\end{align}
In the remainder of this paper, we demonstrate the results of this paper for regression models discussed above.

\begin{figure}[t]
	\centering 
	\begin{tikzpicture}
	\node[] at (0,0) {\includegraphics[width=.65\linewidth]{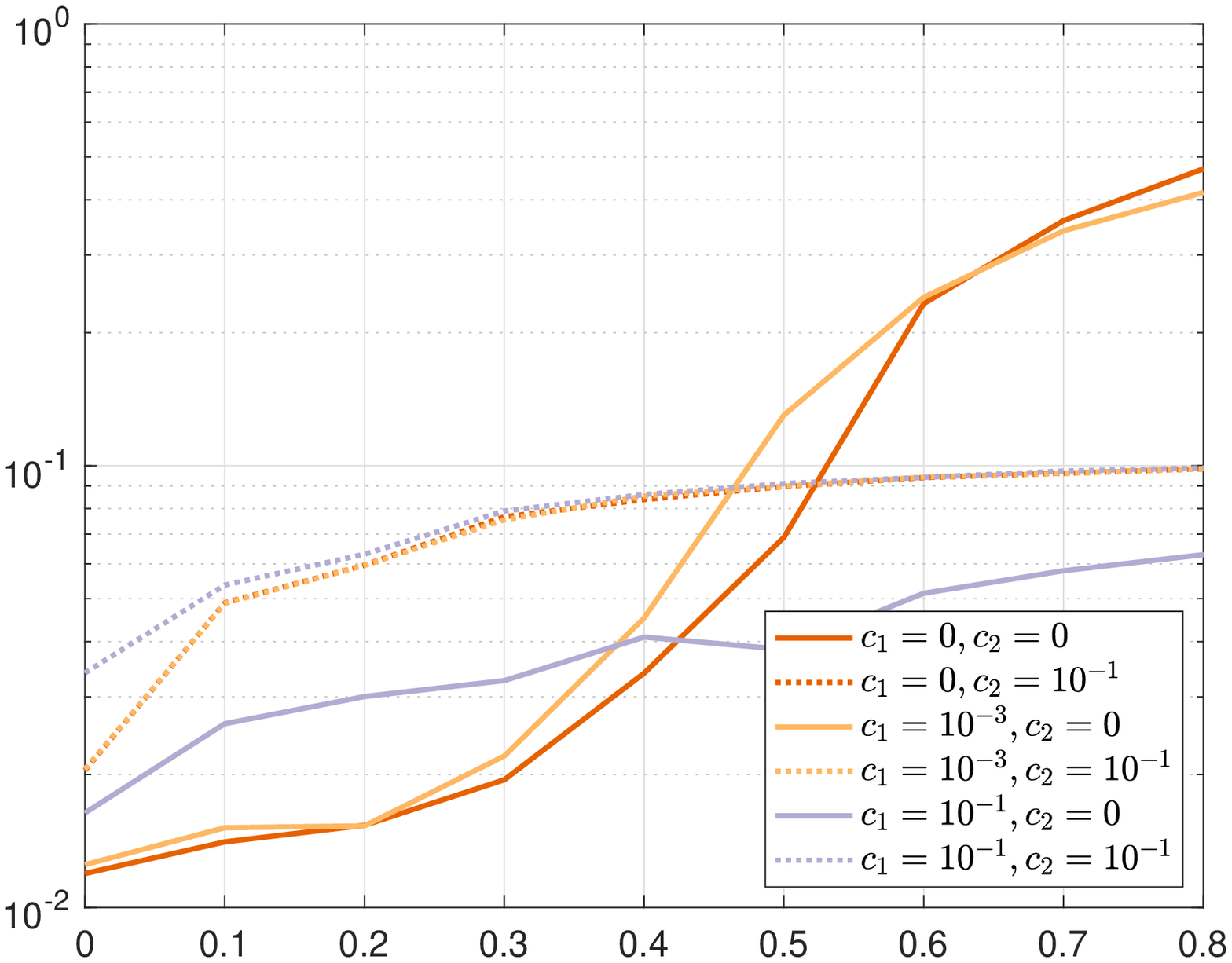}};
	\node[] at (0,-3.1) {$\beta$};
	\node[rotate=90] at (-3.9,0) {$\mathbb{E}^{\mathbb{P}}\{\ell(\mathfrak{M}(x;\theta),y)\}$};
	\end{tikzpicture}
	\caption{\label{fig:1} Test performance of the trained model for the Wine Quality dataset when using~\eqref{eqn:optimization:4_regression_quadratic}  to mitigate the effect of data modification attacks versus the percentage of the poisoned data~$\beta$ for various choices of regularization.}
\end{figure}

\begin{figure}[t]
	\centering 
	\begin{tikzpicture}
	\node[] at (0,0) {\includegraphics[width=.65\linewidth]{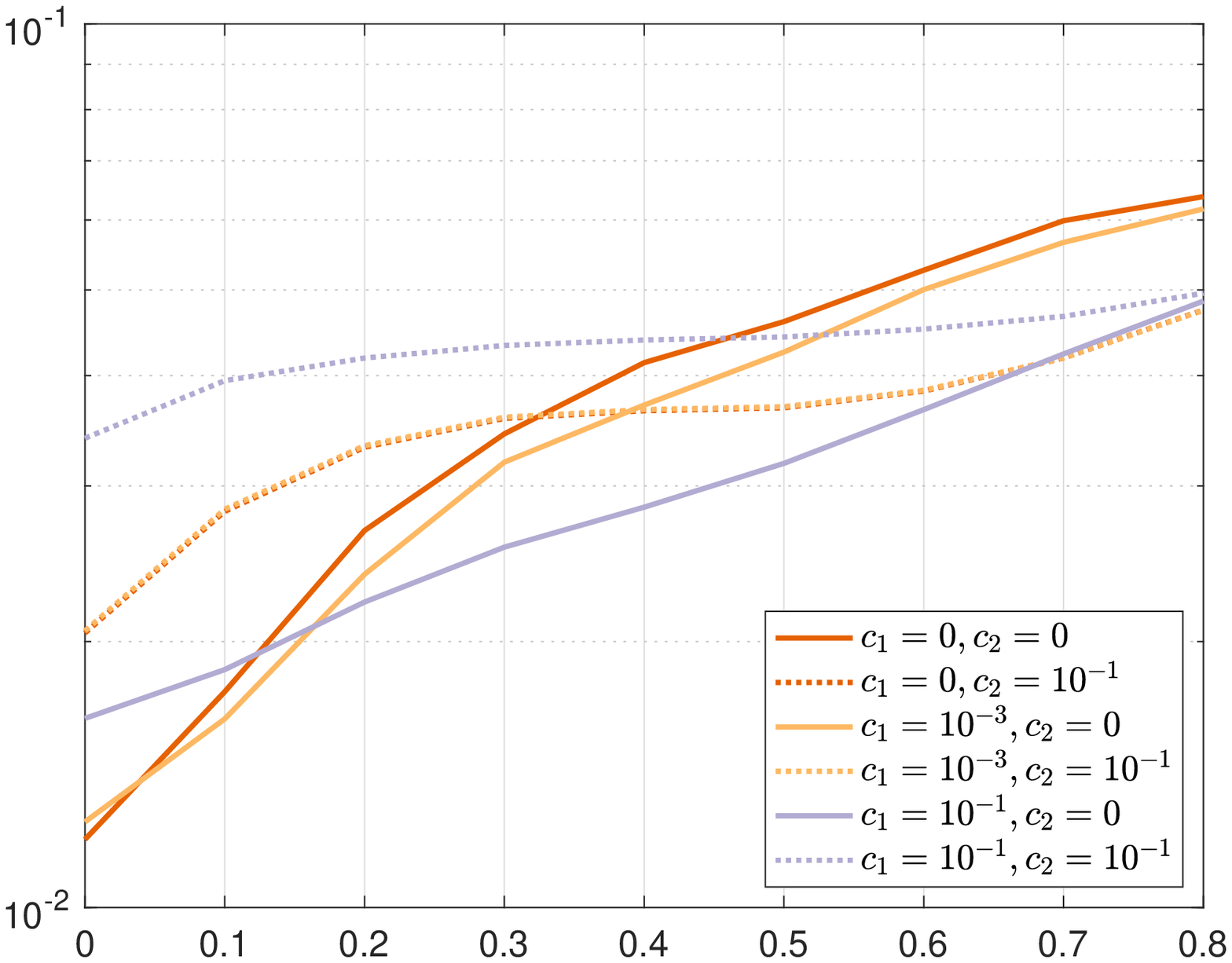}};
	\node[] at (0,-3.1) {$\beta$};
	\node[rotate=90] at (-3.9,0) {$\mathbb{E}^{\mathbb{P}}\{\ell(\mathfrak{M}(x;\theta),y)\}$};
	\end{tikzpicture}
	\caption{\label{fig:2}Test performance of the trained model for the Wine Quality dataset when using~\eqref{eqn:optimization:4_regression_quadratic}  to mitigate the effect of label flipping attacks versus the percentage of the poisoned data~$\beta$ for various choices of regularization.}
\end{figure}

\begin{figure}[t]
	\centering
	\begin{tikzpicture}
	\node[] at (0,0) {\includegraphics[width=.65\linewidth]{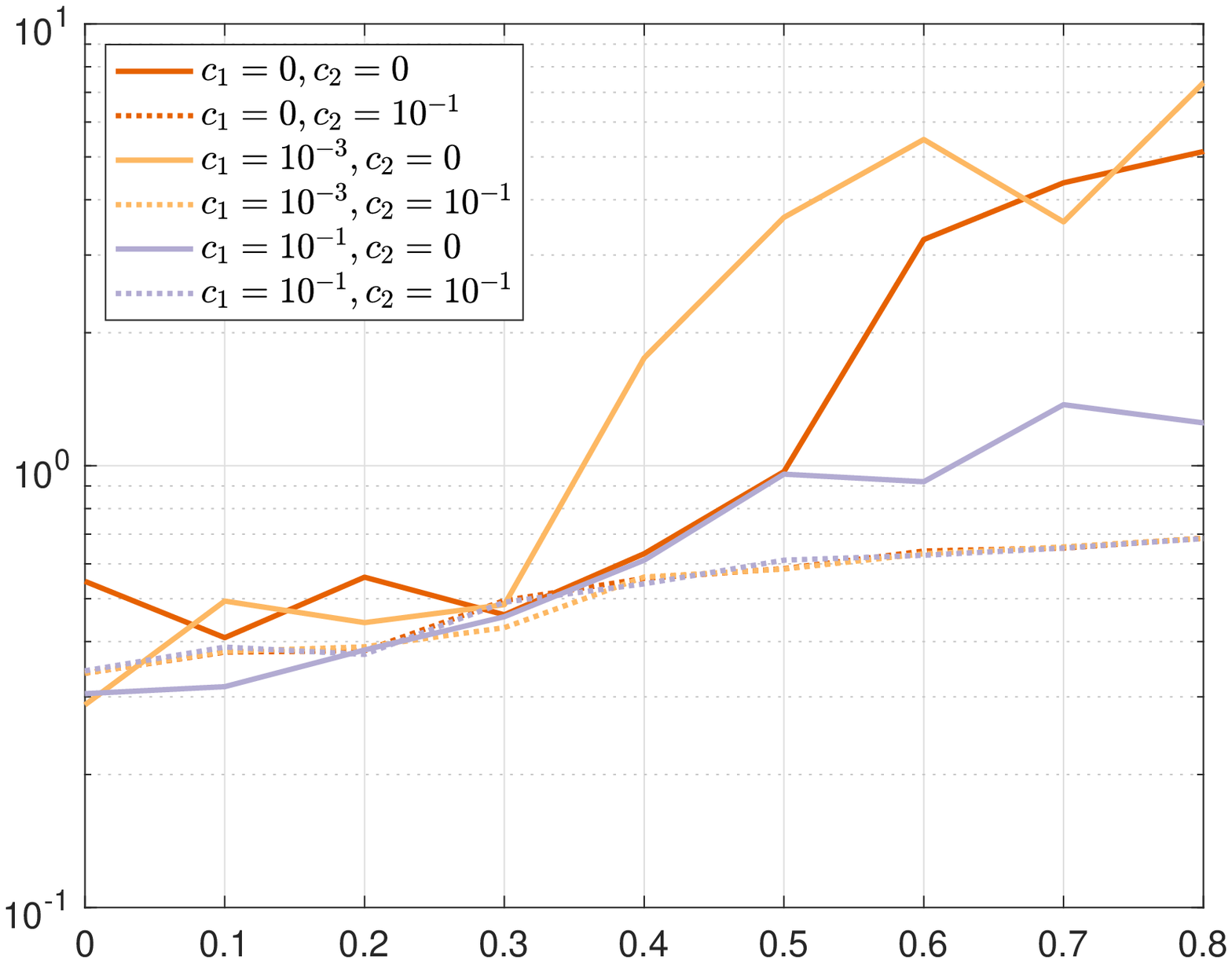}};
	\node[] at (0,-3.1) {$\beta$};
	\node[rotate=90] at (-3.9,0) {$\mathbb{E}^{\mathbb{P}}\{\ell(\mathfrak{M}(x;\theta),y)\}$};
	\end{tikzpicture}
	\caption{\label{fig:3} Test performance of the trained model for the Boston Housing Market  dataset when using~\eqref{eqn:optimization:4_regression_quadratic}  to mitigate the effect of data modification attacks versus the percentage of the poisoned data~$\beta$ for various choices of regularization.}
\end{figure}

\begin{figure}[t]
		\centering
	\begin{tikzpicture}
	\node[] at (0,0) {\includegraphics[width=.65\linewidth]{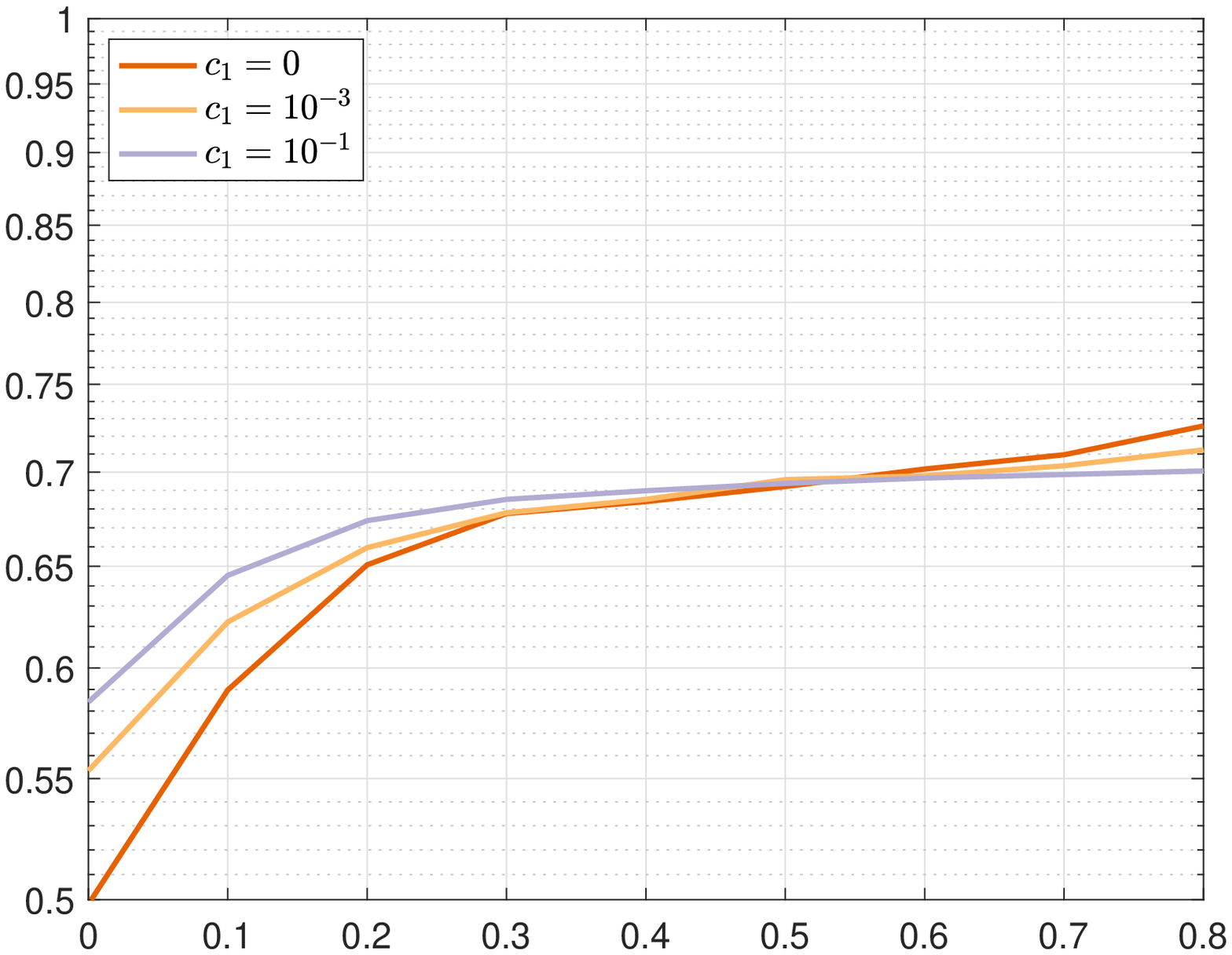}};
	\node[] at (0,-3.1) {$\beta$};
	\node[rotate=90] at (-3.9,0) {$\mathbb{E}^{\mathbb{P}}\{\ell(\mathfrak{M}(x;\theta),y)\}$};
	\end{tikzpicture}
	\caption{\label{fig:4}Test performance of the trained model for the Adult dataset when using~\eqref{eqn:optimization:4_logistic_regression}  to mitigate the effect of label flipping attacks versus the percentage of the poisoned data~$\beta$ for various choices of regularization.}
\end{figure}

\section{Numerical Example}
\label{sec:example}

\subsection{Dataset Description}
We use three different datasets to demonstrate the results of this paper. The first dataset is the red Wine Quality dataset containing 1599 records of red wine tests~\cite{cortez2009modeling}. There are 11 inputs capturing physicochemical and sensory measurements for the wine. The output is the quality scores ranging from zero to ten.  The second dataset is the Boston Housing Market dataset containing 506 records from Boston neighbourhood house prices~\cite{harrison1978hedonic}. The inputs are features, such as per capita crime rate in the neighbourhood and average number of rooms. The output is the median home prices in the neighbourhood. The final dataset is the Adult dataset containing 48,842 records from the 1994 Census~\cite{kohavi1996scaling}. In this dataset, there are 14 individual attributes, such as age, race, occupation, and relationship status, as inputs and income level (i.e., above or below \$50K per annum) as output. We translate all categorical attributes and outputs to integers.

\subsection{Experiment Setup}
We demonstrate the effect of data poisoning on regression. We train linear regression models for the  Wine Quality and the Boston Housing Market datasets according to~\eqref{eqn:optimization:4_regression_quadratic} and train logistic regression models for the Adult dataset according to~\eqref{eqn:optimization:4_logistic_regression}. We split each dataset into two equal halves: one for training and the other for test. We consider two form of attacks: label flipping and data modification.  

When using linear regression for the  Wine Quality dataset, we use two attacks. The first attack is the data modification by changing the features of $\beta$ percent of the dataset to a Gaussian variable with variance of 4. The second attack is a label flipping the output for $\beta$ percent of the dataset to 10. When using linear regression for the Boston Housing Market dataset, we use the data modification by changing the features of $\beta$ percent of the dataset to a Gaussian variable with variance of 100. Finally, when using logistic regression for the Adult dataset, we perform label flipping  attack by negating the output for $\beta$ percent of the dataset. 

\subsection{Experimental Results}
We first consider the Wine Quality dataset. Figure~\ref{fig:1} shows the test performance of the trained model using~\eqref{eqn:optimization:4_regression_quadratic}  for the data modification attack versus the percentage of the poisoned data $\beta$ for various choices of regularization. In this graph, $c_1$ denotes the weight behind $\|\theta\|_*^2$ and $c_2$ denotes the weight behind $\|\theta\|_*$ in~\eqref{eqn:optimization:4_regression_quadratic} . Here, we use the 2-norm as $\|\cdot\|$. Therefore, $\|\cdot\|_*$ is also equal to the 2-norm. For large enough attack levels (i.e., large $\beta$), the test performance for the regularized model is significantly better. However, this comes at the cost of a lower performance when there is no poisoning attack. This is because we are solving a distributionally-robust machine learning problem to mitigate the effect of poisoning attacks (with the conservative-ness dictated by $\rho$ which is proportional to $c_1$ and $c_2$). The robust model can act more conservative if there are no attacks as in those cases there was no need for robustness and regularization. Figure~\ref{fig:2} shows the test performance of the trained model using~\eqref{eqn:optimization:4_regression_quadratic}  for the label flipping attack versus the percentage of the poisoned data $\beta$ for various choices of regularization. The same trend can also be observed in this case. 

Now, we move to the Boston Housing Market dataset. Figure~\ref{fig:3} shows the test performance of the trained model using~\eqref{eqn:optimization:4_regression_quadratic}  for the data modification attack versus the percentage of the poisoned data $\beta$ for various choices of regularization. The regularized models beat the non-regularized one in terms of performance in the presence of poisoning attacks. 

Finally, we consider the Adult dataset. Figure~\ref{fig:4} illustrates the test performance of the trained model when using the logistic regression training in~\eqref{eqn:optimization:4_logistic_regression}  to mitigate the effect of label flipping attacks versus the percentage of the poisoned data~$\beta$ for various choices of regularization. The regularization here also offers robustness against poisoning attacks. 

\section{Conclusions and Future Work}
\label{sec:conc}

We use distributionally-robust optimization for machine learning to mitigate the effect of data poisoning attacks. We show that this problem can be cast as a regularized machine learning problem. We provide performance guarantees for the trained model on the correct data by training the model for the worst-case distribution on a neighbourhood around the empirical distribution (extracted from the training dataset corrupted by a poisoning attack) defined using the Wasserstein distance. Future work can focus on validation of the results on more general machine learning models, such as neural networks.

\bibliographystyle{plain}
\bibliography{citation}
\end{document}